\definecolor{mydarkblue}{rgb}{0.0, 0.0, 0.7}
\newcommand{\supp}{\textnormal{supp}}
\renewcommand{\subset}{\subseteq}
\def\1{\bm{1}}
\def\va{{\bm{a}}}
\def\vf{{\bm{f}}}
\def\vg{{\bm{g}}}
\def\vv{{\bm{v}}}
\def\vx{{\bm{x}}}
\DeclareMathAlphabet{\mathsfit}{\encodingdefault}{\sfdefault}{m}{sl}
\SetMathAlphabet{\mathsfit}{bold}{\encodingdefault}{\sfdefault}{bx}{n}
\def\gA{{\mathcal{A}}}
\def\gF{{\mathcal{F}}}
\def\gG{{\mathcal{G}}}
\def\gQ{{\mathcal{Q}}}
\def\gX{{\mathcal{X}}}
\def\sE{{\mathbb{E}}}
\def\sR{{\mathbb{R}}}
\newcommand{\normin}[1]{\lVert #1 \rVert}
\renewcommand{\subset}{\subseteq}
\DeclarePairedDelimiterX{\infdivx}[2]{(}{)}{%
  #1\;\delimsize\|\;#2%
}
\crefname{example}{Example}{Examples}
\Crefname{example}{Example}{Examples}
\crefname{theorem}{Theorem}{Theorems}
\Crefname{theorem}{Theorem}{Theorems}
\crefname{lemma}{Lemma}{Lemmas}
\Crefname{lemma}{Lemma}{Lemmas}
\crefname{remark}{Remark}{Remarks}
\Crefname{remark}{Remark}{Remarks}
\crefname{proposition}{Proposition}{Propositions} 
\Crefname{proposition}{Proposition}{Propositions} 
\crefname{corollary}{Corollary}{Corollaries}
\Crefname{corollary}{Corollary}{Corollaries}
\crefname{definition}{Definition}{Definitions}
\Crefname{definition}{Definition}{Definitions}
\crefname{assumption}{Assumption}{Assumptions}
\Crefname{assumption}{Assumption}{Assumptions}
\Crefname{equation}{}{}
\crefname{equation}{}{}
\newtheorem{example}{Example}
\newtheorem{theorem}{Theorem}
\newtheorem{lemma}[theorem]{Lemma}
\newtheorem{remark}{Remark}
\newtheorem{proposition}[theorem]{Proposition} 
\newtheorem{definition}{Definition}
\newtheorem{desideratum}{Desideratum}
\newtheorem{assumption}{Assumption}
\title{On the Identifiability of Latent Action Policies}
\author{
Sébastien Lachapelle\\
  Samsung AI Lab, Montreal\\
  \texttt{s.lachapelle@samsung.com} \\
  % examples of more authors
  % \And
  % Coauthor \\
  % Affiliation \\
  % Address \\
  % \texttt{email} \\
  % \AND
  % Coauthor \\
  % Affiliation \\
  % Address \\
  % \texttt{email} \\
  % \And
  % Coauthor \\
  % Affiliation \\
  % Address \\
  % \texttt{email} \\
  % \And
  % Coauthor \\
  % Affiliation \\
  % Address \\
  % \texttt{email} \\
}
\begin{document}

\maketitle

\begin{abstract}
  We study the identifiability of \textit{latent action policy learning} (LAPO), a framework introduced recently to discover representations of actions from video data. We formally describe desiderata for such representations, their statistical benefits and potential sources of unidentifiability. Finally, we prove that an entropy-regularized LAPO objective identifies action representations satisfying our desiderata, under suitable conditions. Our analysis provides an explanation for why \textit{discrete} action representations perform well in practice.%Our analysis partly explains why \textit{discrete} action representations are crucial in practice.
\end{abstract}

\section{Introduction \& background}\label{sec:intro}
In robot control, \textit{behavior cloning} is an approach to learn a policy from action-labeled expert trajectories~\citep{NIPS1988_812b4ba2,foster2024is}. It simply consists in training a policy $\pi(a \mid \vx)$ via supervised learning on data of the form $\{(\vx_i, a_i)\}_{i=1}^N$ where $a_i$ is the action taken by the expert policy in state $\vx_i$. The success of the approach relies on at least two aspects: (i) the demonstrations $(\vx_i, a_i)$ have to be generated from a sufficiently good expert policy, and (ii) the number of demonstrations $N$ has to be sufficiently large to make learning possible. Although conceptually simple, the framework requires a large amount of action-labeled expert demonstrations to be successful, which can be costly to acquire.

To address this issue, \citet{schmidt2024learning} introduced \textit{latent action policy learning} (LAPO) which can leverage large corpuses of unannotated video data in order to reduce reliance on action-labeled expert trajectories. LAPO proceeds in three stages. \textbf{First}, given a large dataset of state/next-state pairs $\{(\vx_i, \vx_i')\}_{i=1}^N$, LAPO minimizes the reconstruction loss 
\begin{align*}
    \textstyle \frac{1}{N}\sum_{i=1}^N \sE_{\hat a \sim \hat q(\hat a \mid \vx_i, \vx_i')}\normin{\vx'_i - \hat\vg(\vx_i, \hat a)}^2_2\,,
\end{align*}
where the \textit{inverse dynamics model} (IDM) $\hat q(\hat a \mid \vx, \vx')$ encodes pairs $(\vx, \vx')$ into action representations $\hat a$ which are then decoded using a \textit{forward dynamics model} (FDM) $\hat\vx' = \hat \vg(\vx, \hat a)$. $\text{\textbf{Secondly}}$, the IDM is used to label the unlabeled video dataset, which yields $\{(\vx_i, \hat a_i, \vx_i')\}_{i=1}^N$ where ${\hat a_i := \arg\max_{\hat a} \hat q(\hat a \mid \vx_i, \vx'_i)}$. This dataset is then used to train the \textit{latent action policy} $\hat\pi(\hat a \mid \vx)$. \textbf{Thirdly}, a learnable head is applied on top of $\hat\pi(\hat a \mid \vx)$ and trained to map the latent actions $\hat a$ to actual actions $a$ using a much smaller domain-specific action-labeled dataset $\{(\vx_i, a_i)\}_{i = 1} ^{N_a}$ ($N_a << N$). While doing that, one can choose to either freeze the latent action policy $\hat\pi$ or fine-tune it. The authors showed that, thanks to this approach, fewer action-labeled samples are needed to train a good policy.\footnote{The authors also show $\hat \pi$ can be fine-tuned via reinforcement learning more efficiently.} Since then, this idea has been applied at larger scale in Genie~\citep{bruce2024genie} and augmented with textual goal-conditioning in LAPA~\citep{ye2025latent}.

%of the form $\normin{\vx' - \hat\vg(\vx, \hat\vf(\vx, \vx'))}^2$ where $\hat a := \hat\vf(\vx, \vx')$ is an \textit{inverse dynamics model} (IDM) and $\hat\vg(\vx, \hat a)$ is a \textit{forward dynamics model} (FDM). Second, the \textit{latent action policy} $\hat\pi(\hat a \mid \vx)$ is trained to predict the output of $\hat a = \hat \vf(\vx, \vx')$ on the unlabeled video data. 

Motivated by the growing importance of this framework, we propose to study the identifiability of LAPO. Although identifiability in representation learning has been the subject of recent research efforts~\citep{hyvarinen2023nonlinear,iVAEkhemakhem20a,vonkugelgen2021selfsupervised,lachapelle2022disentanglement,buchholz2022function,zhang2023identifiability}, to the best of our knowledge, identifiability in the context of latent action modeling has not been investigated. %This setting presents new technical challenges such as the discreteness of the learned representation $\hat a$ and the dependency on the current state $\vx$ in both the encoder $\hat q(\hat a \mid \vx, \vx')$ and the decoder $\hat\vg(\vx, \hat a)$.

\noindent\textbf{Contributions.} First, we postulate a data-generating process for the expert transitions $(\vx, a, \vx')$ (\Cref{sec:dgp}). Next, we provide formal desiderata for the IDM $\hat q(\hat a \mid \vx, \vx')$ to be useful and discuss statistical consequences (\Cref{sec:desiderata}). We further discuss two potential sources of unidentifiability (\Cref{sec:unident}) and, finally, we prove that an entropy-regularized LAPO objective (\Cref{sec:LAPO}) is guaranteed to identify an IDM satisfying the said desiderata, under suitable conditions (\Cref{sec:ident}).

\section{Identifiability analysis of LAPO}
\subsection{Data-generating process}\label{sec:dgp}
Let $\vx \in \gX := [0,1]^d$ be the current observation, $\vx' \in \gX' := [0,1]^{d'}$ be the future observation and $a \in \gA := \{1, \dots, k\}$ be a discrete action (in line with practical implementations~\citep{schmidt2024learning,bruce2024genie,ye2025latent}). The most natural situation is when $\vx$ and $\vx'$ corresponds to two consecutive frames, i.e. $\vx := \vx^t$ and $\vx' = \vx^{t+1}$ and $a = a^t$. But one could also consider different situations where the model is conditioned on a window of past observations: $\vx=\vx^{t-k:t}$ with $a=a^{t-k:t}$. Similarly, the $\vx'$ could correspond to a window of multiple frames in the future.

We assume the current state $\vx \in \gX$ is sampled from some (Lebesgue) density function $p(\vx)$. Furthermore, an action $a \in \gA$ is chosen according to a ground-truth policy $\pi$ conditioned on $\vx$:
\begin{align*}
    \vx \sim p(\vx) \quad\quad \text{and} \quad\quad a\sim \pi(a \mid \vx) \,.
\end{align*}
Define $p(\vx, a):= p(\vx)\pi(a \mid \vx)$, $p(a) := \int p(\vx, a)d\vx$, and $p(\vx \mid a) := p(\vx, a)/p(a)$.%, which are respectively the joint distribution of $(\vx, a)$, the marginal distribution of $a$ and the conditional density of $\vx$ given $a$.

We assume the future observation $\vx'$ is given by a deterministic transition model 
$$\vx' = \vg(\vx, a) \text{, where}\ \vg: \gX \times \gA \rightarrow \gX'\,.$$
This process induces a joint probability distribution over $(\vx, \vx')$, which we denote by $p(\vx, \vx')$.\footnote{$p(\vx, \vx')$ is an abuse of notation since the distribution of $(\vx, \vx')$ has no a density (w.r.t. Lebesgue).} %This is the distribution for the unlabeled (action free) video data. %We denote the distribution for the \textit{action-labeled} dataset by $\tilde p(\vx, a, \vx)$, which is induced by a potentially different distribution $\tilde p(\vx)$ and a potentially different policy ${\tilde \pi(a \mid \vx)}$, but the same deterministic transition model $\vg(\vx, a)$. 

\noindent\textbf{Support notation.} In what follows, the support of $p(\vx \mid a)$ is defined as
$$\textstyle \supp[p(\vx \mid a)] := \{\vx_0 \in \gX \mid \forall\ \text{open neighborhood $U$ of}\ \vx_0, \int_U p(\vx \mid a)d\vx > 0\}\,,$$
where $\int d\vx$ denotes the Lebesgue integral. Note that $\supp[p(\vx \mid a)]$ might depend on $a$. Define also $\supp[p(a)] := \{a \in \gA \mid p(a) > 0\}$ and $\supp[p(\vx, a)] := \bigcup_{a \in \supp[p(a)]}\supp[p(\vx \mid a)] \times \{a\}$.

\subsection{Desiderata}\label{sec:desiderata}
In this section, we formalize three desiderata for an action representation and discuss statistical efficiency. Intuitively, we want to learn an encoder $\hat q (\hat a \mid \vx, \vx')$ that captures useful information about the ground-truth action $a$. %As mentioned in the previous section, because of the zero entropy condition in \Cref{def:Q}, the IDM $\hat q(\hat a \mid \vx, \vx')$ is deterministic and thus concentrates its mass on some action $\hat a = \hat\vf(\vx, \vx')$. 
To formalize this, we will study
\begin{align*}
    \vv(\hat a \mid \vx, a) := \hat q (\hat a \mid \vx, \vg(\vx, a))\, ,
\end{align*}
defined for all $(\vx, a) \in \supp[p(\vx, a)]$. The conditional probability mass function $\vv(\hat a \mid \vx, a)$ maps pairs $(\vx, a)$ to their corresponding learned action representations $\hat a$, potentially in a stochastic way. This is effectively an \textit{entanglement map}, as studied in identifiable representation learning~\citep{lachapelle2024nonparametric}. Our first desideratum is to have a deterministic map from $a$ to $\hat a$.
\begin{desideratum}[Determinism]\label{desideratum1} There exists a function $\vv(\vx, a)$ such that $\vv(\hat a \mid \vx, a) = {\mathbf{1}(\hat a = \vv(\vx, a))}$ for all $(\vx, a) \in \supp[p(\vx, a)]$, where $\mathbf{1}(\cdot)$ is the indicator function. 
\end{desideratum}

Notice how the learned action representation $\hat a$ might depend on the current state $\vx$ via $\hat a = \vv(\vx, a)$. Such a dependence is undesirable since it signifies that the meaning of $\hat a$, i.e. how it relates to $a$, depends on the current state $\vx$. We illustrate this unfortunate state of affairs with a simple example.

\begin{example}
Consider a manipulation task where $\gA:=\{\texttt{left}, \texttt{right}\}$ and suppose 
\begin{align*}
    &1 = \vv(\vx_0, a = \texttt{left}), \quad 2 = \vv(\vx_0, a = \texttt{right}),\\
    &2 = \vv(\vx_1, a = \texttt{left}), \quad 1 = \vv(\vx_1, a = \texttt{right})\,.
\end{align*}
We can see that the meaning of $\hat a$ depends on the context $\vx$: When in state $\vx_0$, $\hat a = 1$ corresponds to $a = \texttt{left}$, whereas in state $\vx_1$, $\hat a = 1$ corresponds to $a = \texttt{right}$. 
\end{example}

This undesirable phenomenon, described informally by \citet[Section 6.2]{schmidt2024learning}, can be thought of as a form of entanglement since $\hat a$ entangles both $a$ and $\vx$. This motivates:
\begin{desideratum}[Disentanglement]\label{desideratum2}
    There exists a function $\vv(a)$ such that, for all $(\vx, a) \in \supp[p(\vx, a)]$, $\vv(\vx, a) = \vv(a)$.
\end{desideratum}
Furthermore, we want the latent action $\hat a$ to reveal all there is to know about the ground-truth action $a$. More formally, we want that  two distinct actions $a_1$ and $a_2$ never map to the same latent action $\hat a$:
\begin{desideratum}[Informativeness]\label{desideratum3}
    The function $\vv: \supp[p(a)] \rightarrow \hat\gA$ is injective.
\end{desideratum}

\noindent\textbf{Statistical efficiency.} As explained in \Cref{sec:intro}, an encoder/IDM $\hat q(\hat a \mid \vx, \vx')$  can be used to label the action-free video dataset, yielding $\{(\vx_i, \hat a_i, \vx'_i)\}^N_{i=1}$. If the IDM satisfies our desiderata, the newly labeled dataset is actually $\{(\vx_i, \vv(a_i), \vx'_i)\}^N_{i=1}$ where $a_i$ is the action taken by the expert policy $\pi(a \mid \vx_i)$. Thus the latent action policy $\hat \pi(\hat a \mid \vx)$ trained on this data will approximate %${\pi(\vv^{-1}(\hat a) \mid \vx)\mathbf{1}(\hat a \in \vv(\supp[p(a)]))}$, i.e. 
the distribution of $\vv(a)$ when $a \sim \pi(a \mid \vx)$. This means that there exists a transformation $\sigma: \hat\gA \rightarrow \gA$ (any extension of $\vv^{-1}: \vv(\supp[p(a)]) \rightarrow \gA$) such that $\sigma(\hat a) \sim \pi(a \mid \vx)$ when $\hat a  \sim \pi(\hat a \mid \vx)$. Hence, to get the expert policy $\pi$, we only need to learn $\sigma: \hat\gA \rightarrow \gA$ ``on top of'' the latent action policy $\hat\pi$ using the smaller action-labeled dataset. Had $\hat a$ been dependent on $\vx$, such a transformation $\sigma$ would not exist, forcing us to resort to either fine-tuning $\hat\pi$ or learning a map $\gX\times\hat\gA \rightarrow \gA$ on top of $\hat\pi$, both of which are expected to be less statistically efficient than learning the simpler function $\sigma: \hat\gA \rightarrow \gA$.

\subsection{A formal entropy-regularized LAPO objective}\label{sec:LAPO}
We now present a formal entropy-regularized LAPO objective. \Cref{thm:ident} will show that, under suitable assumptions, its solutions must satisfy the desiderata of \Cref{sec:desiderata}. 

In order to learn a deterministic encoder, we add an entropy regularizer $H(\hat q(\cdot \mid \vx, \vx')) := {-\sE_{\hat q(\hat a \mid \vx, \vx')}\log \hat q (\hat a \mid \vx, \vx')}$. In the limit of infinite data, our entropy-regularized LAPO objective is
\begin{align}
    \min_{\hat \vg \in \gG, \hat q \in \gQ} \sE_{p(\vx, \vx')} \left[\sE_{\hat q (\hat a \mid \vx, \vx')} \normin{\vx' - \hat\vg(\vx, \hat a)}^2_2 + \beta H(\hat q(\cdot \mid \vx, \vx')) \right] \, , \label{prob:population}
\end{align}
where $\beta > 0$ controls regularization, and $\gG$ and $\gQ$ are respectively the hypothesis spaces for $\hat \vg$ and $\hat q$.

\begin{definition}[FDM hypothesis space $\gG$] \label{def:G} A function $\hat\vg : \gX \times \hat \gA \rightarrow \gX'$ is in $\gG$ if and only if, for all $\hat a \in \hat\gA$, $\hat\vg(\vx, \hat a)$ is continuous in $\vx$.
\end{definition}

\begin{definition}[IDM hypothesis space $\gQ$] \label{def:Q} Let $\hat \gA := \{1, \dots, \hat k\}$ be the space of action representations $\hat a$. A function $\hat q: \hat\gA \times \gX \times \gX' \rightarrow [0,1]$ is in $\gQ$ if and only if (i) for all $(\vx, \vx') \in \gX\times\gX'$, ${\sum_{\hat a \in \hat\gA}\  \hat q (\hat a \mid \vx, \vx') = 1}$, and (ii) for all $\hat a \in \hat\gA$, $\hat q(\hat a \mid \vx, \vx')$ is continuous in $(\vx, \vx')$.
    %\item \textbf{[Zero entropy]} $\forall (\vx, \vx') \in \gX\times\gX',\ H(\hat q (\hat a \mid \vx, \vx')) = 0$. \seb{ISSUE!!! We should require it only on the support of the data!!! That's what the regularizer was doing... In fact, if the entropy is zero everywhere, $\hat a$ is always the same!!! So it doesn't really apply to temperature going to zero... Also fix \Cref{prop:min_existence} consequently...}
\end{definition}
Note that our identifiability guarantee, \Cref{thm:ident}, does not assume $\hat k = k$, only $\hat k \geq k$.

One can easily see that both terms in Problem \Cref{prob:population} are lower bounded by zero. Additionally, \Cref{prop:min_existence} (in appendix) shows that there exists $(\vg^*, q^*) \in \gG\times\gQ$ such that both terms are equal to zero (under \Cref{ass:continuous_g,ass:injective_g}). This means that, at optimality, the entropy regularizer must be equal to zero thus forcing the learned IDM $\hat q(\hat a \mid \vx, \vx')$ to be deterministic for all $(\vx, \vx') \in \supp[p(\vx, \vx')]$. In other words, at optimality, $\hat q(\hat a \mid \vx, \vx') = \mathbf{1}(\hat a = \hat \vf(\vx, \vx'))$ for some function $\hat \vf : \supp[p(\vx, \vx')] \rightarrow \hat\gA$. % the IDM $\hat q(\hat a \mid \vx, \vx')$ must be deterministic, i.e. $\hat q(\hat a \mid \vx, \vx') = \mathbf{1}(\hat a = \hat \vf(\vx, \vx'))$ for some function $\hat \vf : \supp[p(\vx, \vx')] \rightarrow \hat\gA$.

\begin{remark}
    The above development begs the question: Why are we considering a stochastic IDM $\hat q$ to later regularize it to be deterministic? A perhaps more natural route would be to directly train a deterministic encoder $\hat\vf: \gX \times \gX' \rightarrow \hat \gA$. From an optimization perspective, a stochastic encoder is helpful as it unlocks gradient computation via the reparameterization trick~\citep{jang2016categorical}. From a theoretical perspective, the continuity condition on $\hat q$ is crucial for our proof, as it excludes pathological encoders $\hat\vf$ that would present ``jumps'' on the connected components of $\supp[p(\vx, \vx')]$. We conjecture that the VQ-VAE approach of \citet{schmidt2024learning}, which is limited to deterministic discrete encoders, can in principle lead to such pathological behaviors. We leave this for future work. %Whether switching to this entropy-regularized stochastic encoder would improve performance in practice is %left open.%\seb{details in appendix?}
\end{remark}

\subsection{Potential sources of unidentifiability}\label{sec:unident}
We now show that, without assumptions on the data-generating process or without restrictions on the hypothesis classes $\gQ$ and $\gG$, Problem \Cref{prob:population} admits degenerate solutions which do not satisfy our desiderata of \Cref{sec:desiderata}.

\begin{example}[No restriction on $\hat\gA$] In principle, one can choose $\hat\gA := \gX'$ and  $\hat q(\hat a \mid \vx, \vx') := \delta(\hat a - \vx')$ where $\delta$ is the Dirac function. Hence the IDM outputs $\vx'$ deterministically. By choosing $\hat\vg(\vx, \hat a) = \hat a$, we clearly solve Problem~\cref{prob:population}, but the action representation $\hat a$ is uninteresting. In fact, this can be understood as a violation of \Cref{desideratum2} since $\hat a = \vx'$ clearly depends on $\vx$ via $\vx' = \vg(\vx, a)$.
\end{example}

\begin{example}[Deterministic $\pi(a \mid \vx)$] Assume $\pi(a \mid \vx) = \mathbf{1}(a = \pi(\vx))$, i.e. the ground-truth policy is deterministic. In that case, one can solve the reconstruction problem by choosing $\hat\vg(\vx, \hat a) := \vg(\vx, \pi(\vx))$ since $\vx' = \vg(\vx, \pi(\vx))$ with probability one. In that case, the latent action $\hat a$ is completely ignored by the FDM and  thus the IDM $\hat q(\hat a \mid \vx, \vx')$ could simply output the same action deterministically, which would clearly present a violation of \Cref{desideratum3}.
\end{example}

\subsection{Main identifiability result}\label{sec:ident}
\begin{figure}
    \centering
    \includegraphics[width=1\linewidth]{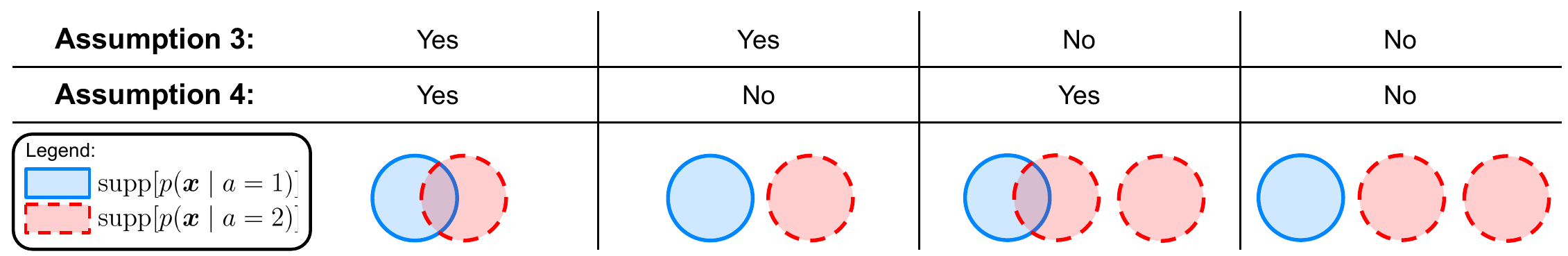}
    \caption{Illustration of \Cref{ass:connected_p(x|a),ass:intersect_p(x|a)}. Assume $\gA := \{1, 2\}$}
    \label{fig:topo_assump}
\end{figure}
In this section, we provide sufficient conditions on the data-generating process under which the solutions of Problem \Cref{prob:population} are guaranteed to satisfy the desiderata of \Cref{sec:desiderata}.

%We assume that $\pi(a \mid \cdot)$ %and $\tilde\pi(a \mid \cdot)$ are
%is Lebesgue measurable for all $a \in \gA$. For $\vg$, we require more assumptions:
First of all, we require the ground-truth FDM to be continuous.
\begin{assumption}[Continuous $\vg$]\label{ass:continuous_g}
    For all $a \in \gA$, the ground-truth FDM $\vg(\vx, a)$ is continuous in $\vx$.
\end{assumption}
Additionally, we require that different actions always have different effects in the data-generating process. We formalize this as a form of injectivity.
\begin{assumption}[Injectivity] \label{ass:injective_g}
    For all $\vx \in \gX$ and $a_1, a_2 \in \gA$, if $a_1 \not= a_2$, then $\vg(\vx, a_1) \not= \vg(\vx, a_2)$.
\end{assumption} 

The last two assumptions put topological restrictions on the support of $p(\vx, a)$. Recall that a set $S \subseteq \sR^d$ is said to be \textit{connected} if it ``holds in one piece''~\citep{Munkres2000Topology}. See \Cref{fig:topo_assump} for an illustration.%are illustrated in \Cref{fig:technical_ass}. \seb{TODO}
\begin{assumption}\label{ass:connected_p(x|a)}
For all ${a \in \supp[p(a)]}$, we have that $\supp[p(\vx \mid a)]$ is a connected subset of $\gX$. 
\end{assumption}

\begin{assumption} \label{ass:intersect_p(x|a)}
For all pairs $a_1, a_2 \in \supp[p(a)]$, $\supp[p(\vx \mid a_1)] \cap \supp[p(\vx \mid a_2)] \not= \emptyset$.
\end{assumption}

Note that \Cref{ass:connected_p(x|a),ass:intersect_p(x|a)} are both satisfied for example if $\supp[p(\vx, a)] = \gX \times \gA$.

We are now ready to state the main identifiability result of this work. It shows that, under the assumptions introduced above, the encoder/IDM $\hat q(\hat a \mid \vx, \vx')$ learned by optimizing Problem~\Cref{prob:population} must satisfy the desiderata of \Cref{sec:desiderata}. Its proof can be found in the appendix. Recall ${\vv(\hat a \mid \vx, a) := \hat q(\hat a \mid \vx, \vg(\vx, a))}$.

\begin{restatable}{theorem}{mainThm}\label{thm:ident}
    Suppose $\hat k \geq k$ and let $(\hat \vg, \hat q)$ be a solution\footnote{Under \Cref{ass:continuous_g,ass:injective_g}, a solution is guaranteed to exist by \Cref{prop:min_existence} in appendix.} of Problem~\Cref{prob:population} with hypothesis classes $\gG$ (\Cref{def:G}) and $\gQ$ (\Cref{def:Q}). 
    \begin{enumerate}
        \item If \Cref{ass:continuous_g,ass:injective_g} hold, then \Cref{desideratum1} holds, i.e. there exists a function ${\vv: \supp[p(\vx, a)] \rightarrow \hat\gA}$ such that, for all $(\vx, a) \in \supp[p(\vx, a)]$, 
        $$\vv(\hat a \mid \vx, a) = \mathbf{1}(\hat a = \vv(\vx, a))\,.$$
        \item If \Cref{ass:continuous_g,ass:injective_g,ass:connected_p(x|a)} hold, then \Cref{desideratum2} holds, i.e. there exists a mapping ${\vv: \supp[p(a)] \rightarrow \hat \gA}$ such that, for all $(\vx, a) \in \supp[p(\vx, a)]$, 
        $$\vv(\hat a \mid \vx, a) = \mathbf{1}(\hat a = \vv(a))\,.$$
        \item %\seb{Up to now, the reconstruction loss has not been used!}
        If \Cref{ass:continuous_g,ass:injective_g,ass:connected_p(x|a),ass:intersect_p(x|a)} hold, then \Cref{desideratum3} holds, i.e. the mapping ${\vv: \supp[p(a)] \rightarrow \hat\gA}$ defined above is injective.
    \end{enumerate}
\end{restatable}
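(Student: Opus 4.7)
The plan is to exploit the non-negativity of both terms in Problem~\Cref{prob:population}: since \Cref{prop:min_existence} provides a minimizer $(\vg^*, q^*) \in \gG \times \gQ$ at which the objective equals zero, any solution $(\hat\vg, \hat q)$ must satisfy
\begin{align*}
\sE_{\hat q(\hat a \mid \vx, \vx')} \normin{\vx' - \hat\vg(\vx, \hat a)}^2_2 = 0 \quad\text{and}\quad H(\hat q(\cdot \mid \vx, \vx')) = 0
\end{align*}
for $p(\vx, \vx')$-almost every $(\vx, \vx')$. Because both integrands are continuous in $(\vx, \vx')$ under \Cref{def:G,def:Q} and \Cref{ass:continuous_g}, these equalities extend to hold pointwise on $\supp[p(\vx, \vx')]$. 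The zero-entropy condition forces $\hat q(\cdot \mid \vx, \vx')$ to be a point mass $\mathbf{1}(\hat a = \hat\vf(\vx, \vx'))$ on this set, and zero-reconstruction then yields $\vx' = \hat\vg(\vx, \hat\vf(\vx, \vx'))$. Setting $\vv(\vx, a) := \hat\vf(\vx, \vg(\vx, a))$ then verifies \Cref{desideratum1}, since $(\vx, \vg(\vx, a)) \in \supp[p(\vx, \vx')]$ whenever $(\vx, a) \in \supp[p(\vx, a)]$.

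For Part 2, I rely on the topological principle that a continuous function from a connected space to a discrete space is constant. Fix $a \in \supp[p(a)]$ and $\hat a \in \hat\gA$. The map $\vx \mapsto \hat q(\hat a \mid \vx, \vg(\vx, a))$ is continuous on $\supp[p(\vx \mid a)]$ by \Cref{ass:continuous_g} and \Cref{def:Q}, takes values in $\{0, 1\}$ by Part 1, and has connected domain by \Cref{ass:connected_p(x|a)}. Hence it must be constant in $\vx$. Ranging over $\hat a$, this shows that $\hat\vf(\vx, \vg(\vx, a))$ does not depend on $\vx$, yielding a map $\vv: \supp[p(a)] \to \hat\gA$ with $\vv(\vx, a) = \vv(a)$ throughout $\supp[p(\vx, a)]$.

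Part 3 is then a short contradiction argument. Suppose $\vv(a_1) = \vv(a_2) =: \hat a_0$ for some $a_1 \neq a_2$ in $\supp[p(a)]$. \Cref{ass:intersect_p(x|a)} supplies an $\vx \in \supp[p(\vx \mid a_1)] \cap \supp[p(\vx \mid a_2)]$, and combining the reconstruction identity from Part 1 with the disentanglement from Part 2 gives $\vg(\vx, a_i) = \hat\vg(\vx, \vv(a_i)) = \hat\vg(\vx, \hat a_0)$ for both $i = 1, 2$. Hence $\vg(\vx, a_1) = \vg(\vx, a_2)$, contradicting \Cref{ass:injective_g}.

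The main obstacle is the very first step: passing from an a.e.-zero statement under $p(\vx, \vx')$ to a pointwise-zero statement on all of $\supp[p(\vx, \vx')]$, despite $p(\vx, \vx')$ being singular (concentrated on the graph of $\vg$, as noted in the footnote in \Cref{sec:dgp}). The argument is that if a continuous non-negative function were strictly positive at some point of $\supp[p(\vx, \vx')]$, continuity would make it strictly positive on an open neighborhood, which by definition of the support carries positive $p$-mass, contradicting the a.e.-zero property. The continuity conditions baked into $\gG$ and $\gQ$ therefore do essential work, as does the continuity of $\vg$ that guarantees the curves $\vx \mapsto (\vx, \vg(\vx, a))$ used in Part 2 lie inside $\supp[p(\vx, \vx')]$ and remain connected.
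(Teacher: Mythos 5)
Your proposal is correct and follows essentially the same route as the paper: force both non-negative terms to zero, upgrade the almost-everywhere statements to pointwise statements on the support via continuity, read off determinism from zero entropy, use connectedness of $\supp[p(\vx \mid a)]$ plus continuity of $\vx \mapsto \hat q(\hat a \mid \vx, \vg(\vx,a))$ into $\{0,1\}$ for disentanglement, and derive injectivity by contradiction with \Cref{ass:injective_g} on an intersection point from \Cref{ass:intersect_p(x|a)}. The only cosmetic difference is that you apply the ``zero integral of a continuous non-negative function implies zero on the support'' argument directly to the singular pushforward $p(\vx,\vx')$ (which requires the inclusion $(\vx,\vg(\vx,a)) \in \supp[p(\vx,\vx')]$ for $(\vx,a)\in\supp[p(\vx,a)]$, true by continuity of $\vg$ as you note), whereas the paper first disintegrates over $a$ and applies the same lemma to the Lebesgue-dominated conditionals $p(\vx \mid a)$ on $\gX$.
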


\printbibliography
%\bibliographystyle{plain}
%\bibliography{ref}

%%%%%%%%%%%%%%%%%%%%%%%%%%%%%%%%%%%%%%%%%%%%%%%%%%%%%%%%%%%%
\appendix
\newpage
\section*{Appendix}
Multiple intermediary results are necessary in order to prove \cref{thm:ident}. The proof of the following technical lemma can be safely skipped at first read.

\begin{lemma}\label{lem:q_construct}
    Let $X$ be a metric space\footnote{The result generalizes to the case where $X$ is a perfectly normal topological space.} and let $X_1, X_2, ..., X_K$ be a finite collection of disjoint closed sets of $X$. Then, there exists a function $q: [K] \times X \rightarrow [0,1]$ such that
    \begin{itemize}
        \item for all $x \in X$, $\sum_{k\in [K]} q(k \mid x) = 1$,
        \item for all $k \in [K]$, $q(k \mid x)$ is continuous in $x$ (as a function $X \rightarrow [0,1]$),  and
        \item for all $k \in [K]$ and all $x \in X_k$, $q(k \mid x) = 1$.
    \end{itemize}
\end{lemma}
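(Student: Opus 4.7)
The plan is to build $q$ explicitly from distance functions. In a metric space, for any nonempty closed set $S$, the function $x \mapsto d(x, S) := \inf_{y \in S} d(x,y)$ is continuous, nonnegative, and vanishes precisely on $S$. This is exactly the indicator-of-closeness behavior we need, and the disjointness of the $X_k$'s will let us combine these distances into a partition of unity.

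First I would dispose of degenerate cases. If some $X_k$ is empty, I set $q(k \mid x) := 0$ identically and reduce the problem to the nonempty subfamily. If $K = 1$ I set $q(1 \mid x) := 1$. So assume all $X_k$ are nonempty and $K \geq 2$. Define $d_k(x) := d(x, X_k)$, which is continuous and satisfies $d_k(x) = 0 \iff x \in X_k$. Then set
\begin{align*}
    q(k \mid x) \;:=\; \frac{\prod_{j \neq k} d_j(x)}{\sum_{i=1}^K \prod_{j \neq i} d_j(x)}\,.
\end{align*}

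The key verification is that the denominator never vanishes. Because the $X_k$'s are pairwise disjoint, at most one index $k_0$ can satisfy $d_{k_0}(x) = 0$ at any given $x$. If such a $k_0$ exists, the $i = k_0$ term in the denominator equals $\prod_{j \neq k_0} d_j(x) > 0$; if no $d_j(x)$ vanishes then every summand is positive. Hence the denominator is strictly positive everywhere, so each $q(k \mid \cdot)$ is continuous as a ratio of continuous functions with nonvanishing denominator. The sum-to-one property is immediate from the definition. Finally, for $x \in X_k$ we have $d_k(x) = 0$ and $d_j(x) > 0$ for $j \neq k$, so every term $\prod_{j \neq i} d_j(x)$ with $i \neq k$ contains the zero factor $d_k(x)$, giving numerator and denominator both equal to $\prod_{j \neq k} d_j(x) > 0$, hence $q(k \mid x) = 1$.

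There is no real obstacle here; the main thing to be careful about is the nonvanishing of the denominator, which rests exactly on disjointness (and is the reason why a naive single-distance construction like $q(k \mid x) \propto 1/d_k(x)$ would fail at points in $\bigcup_k X_k$). The remark in the lemma about generalizing to perfectly normal spaces is handled by replacing each $d_k$ with a continuous Urysohn-type function $f_k: X \to [0,1]$ with $f_k^{-1}(0) = X_k$ (which exists precisely by perfect normality) and running the same argument; I would only mention this in passing, as the metric case suffices for the application to \Cref{thm:ident}.
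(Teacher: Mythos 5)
Your proof is correct, and it takes a genuinely more elementary route than the paper's. The paper invokes Vedenissoff's theorem (every perfectly normal space admits, for disjoint closed $A,B$, a continuous $f$ with $f^{-1}(\{0\})=A$ and $f^{-1}(\{1\})=B$) to produce functions $h_k$ with $h_k^{-1}(\{1\})=X_k$ and $h_k^{-1}(\{0\})=\bigcup_{k'\neq k}X_{k'}$, and then normalizes $q(k\mid x)=h_k(x)/\sum_{k'}h_{k'}(x)$. You instead build the per-index bump functions explicitly from the distances $d_k(x)=d(x,X_k)$ via the product $\prod_{j\neq k}d_j(x)$, which vanishes exactly on $\bigcup_{j\neq k}X_j$ and is positive on $X_k$; the rest of the argument (positivity of the denominator via disjointness, normalization, evaluation on $X_k$) is structurally identical to the paper's. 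What your version buys is self-containedness in the metric case --- no citation to general topology is needed, and the functions are completely explicit --- while the paper's version is stated directly at the level of generality of the footnote; your closing remark correctly identifies that perfect normality is exactly what replaces $d_k$ with a continuous $f_k$ satisfying $f_k^{-1}(\{0\})=X_k$, so the generalization goes through. One microscopic corner you gloss over: if \emph{every} $X_k$ is empty, your reduction leaves nothing to normalize and setting all $q(k\mid\cdot)=0$ would break the sum-to-one condition; you should just put all mass on $k=1$ in that (vacuous for the third bullet, and irrelevant for the application) case.
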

\begin{proof}
    We make use of the Vedenissoff theorem~\citep[Theorem 1.5.19]{Eng89}. We extract only the part of the theorem we will need: If a topological space $X$ is perfectly normal, then for every pair of disjoint closed sets $A, B \subseteq X$, there exists a continuous function $f:X \rightarrow [0,1]$ such that $f^{-1}(\{0\}) = A$ and $f^{-1}(\{1\}) = B$.

    Since a metric space is always perfectly normal~\citep[Corollary 4.1.13]{Eng89}, $X$ is perfectly normal and thus we can apply the Vedenissoff theorem.
    
    For each $k \in [K]$, the set $\bigcup_{k' \in [K]\setminus \{k\}} X_{k'}$ is closed since a finite union of closed sets is closed. By Vedenissoff theorem, there exists a continuous function $h_k: X \rightarrow [0,1]$ such that $h_k^{-1}(\{1\}) = X_k$ and $h_k^{-1}(\{0\}) = \bigcup_{k' \in [K]\setminus \{k\}} X_{k'}$.

    We now prove that $\sum_{k \in [K]} h_k(x) > 0$ for all $x \in X$. We consider two cases, $x \in \bigcup_{k\in [K]} X_k$ and $x \not\in \bigcup_{k\in [K]} X_k$. If $x \in \bigcup_{k\in [K]} X_k$, then there exists a $k_0 \in [K]$ such that $x \in X_{k_0} = h_{k_0}^{-1}(\{1\})$ which means $h_{k_0}(x) = 1$. Of course, this implies that the sum is greater than zero. If $x \not\in \bigcup_{k\in [K]} X_k$, in particular we have $x \not\in \bigcup_{k\in [K]\setminus \{1\}} X_k = h_1^{-1}(\{0\})$, which means $h_1(x) > 0$. Of course, this implies the sum is greater than zero.

    Since $\sum_{k \in [K]} h_k(x) > 0$ for all $x \in X$, we can define, for all $(k, x) \in [K]\times X$,  
    $$q(k \mid x) := \frac{h_k(x)}{\sum_{{k'} \in [K]}h_{k'}(x)}\, .$$

    We now verify that $q$ satisfies the three conditions of the theorem. First, it is clear that $\sum_{k \in [K]} q(k \mid x) = 1$. Second, $q(k \mid x)$ is continuous in $x$ since all $h_{k'}$ are continuous functions. 
    
    Third, we check that $q(k \mid x) = 1$ when $x \in X_k$. Let $k \in [K]$ and $x \in X_k$. Since $X_k = h^{-1}_k(\{1\})$, we have that $h_k(x) = 1$. Consider $k' \in [K] \setminus \{k\}$. Clearly, $x \in X_k \subset \bigcup_{k'' \in [K]\setminus \{k'\}} X_{k''} = h^{-1}_{k'}(\{0\})$, which means $h_{k'}(x) = 0$ for $k' \not= k$. We thus have 
    $$q(k \mid x) =  \frac{h_k(x)}{h_k(x) + \sum_{{k'} \in [K]\setminus \{k\}} h_{k'}(x)} = \frac{1}{1 + 0} = 1 \,,$$
    which concludes the proof.
\end{proof}

Define the function $G: \gX \times \gA \rightarrow \gX \times \gX'$ as $G(\vx, a) := (\vx, \vg(\vx, a))$, which is simply the function $\vg(\vx, a)$ with a copy of $\vx$ in its output. Note that its image $G(\gX\times \gA)$ is effectively the set of plausible transition pairs $(\vx, \vx')$. In general, this is expected to be a proper subset of $\gX \times \gX'$. 

\begin{lemma}
    Under \Cref{ass:injective_g}, $G$ is injective.
\end{lemma}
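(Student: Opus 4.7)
The plan is to verify injectivity directly from the definition by chasing equalities componentwise. Suppose $G(\vx_1, a_1) = G(\vx_2, a_2)$ for some $(\vx_1, a_1), (\vx_2, a_2) \in \gX \times \gA$. Unpacking the definition of $G$, this gives the pair equality $(\vx_1, \vg(\vx_1, a_1)) = (\vx_2, \vg(\vx_2, a_2))$ in $\gX \times \gX'$.

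From the equality of the first coordinates, we immediately obtain $\vx_1 = \vx_2$; call this common value $\vx$. Substituting into the equality of the second coordinates then yields $\vg(\vx, a_1) = \vg(\vx, a_2)$. At this point the contrapositive of \Cref{ass:injective_g} finishes the argument: since $\vg(\vx, a_1) = \vg(\vx, a_2)$, we must have $a_1 = a_2$, and hence $(\vx_1, a_1) = (\vx_2, a_2)$. There is no real obstacle here—the statement is essentially unpacking \Cref{ass:injective_g} together with the fact that $G$ retains a copy of $\vx$ in its output, and the only thing to be careful about is invoking \Cref{ass:injective_g} only after reducing to a single base point $\vx_1 = \vx_2$, which is exactly what the first coordinate of $G$ lets us do.
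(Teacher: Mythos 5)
Your proof is correct and follows exactly the same route as the paper's: equate first coordinates to get $\vx_1 = \vx_2$, then apply \Cref{ass:injective_g} to the second coordinates to conclude $a_1 = a_2$. Nothing is missing.
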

\begin{proof}
    If $G(\vx_1, \va_1) = G(\vx_2, \va_2)$, then $\vx_1 = \vx_2$ and $\vg(\vx_1, a_1) = \vg(\vx_1, a_2)$ and thus, by \cref{ass:injective_g}, $a_1 = a_2$.
\end{proof}

\begin{remark}\label{rem:f}
    Since $G$ is injective, it is bijective on its image $G(\gX \times \gA) \subseteq \gX \times \gX'$. Therefore, it has an inverse $F:G(\gX \times \gA) \rightarrow \gX \times \gA$ which clearly has the form $F(\vx, \vx') = (\vx, \vf(\vx, \vx'))$, for some function $\vf: G(\gX \times \gA) \rightarrow \gA$. 
\end{remark}

%It will be handy to define
%\begin{align}
%    \gP := \supp[p(\vx, a)] \subset \gX \times \gA \,,
%\end{align}
%which is essentially the set of possible pairs $(\vx, a)$ under the DGP defined in \cref{sec:dgp}.

\begin{proposition}\label{prop:min_existence}
    Suppose \cref{ass:continuous_g,ass:injective_g} hold and $\hat k \geq k$. Then, there exist $\vg^* \in \gG$ and $q^* \in \gQ$ such that the objective of Problem~\cref{prob:population} is equal to zero. 
\end{proposition}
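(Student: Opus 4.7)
The plan is to exhibit $\vg^* \in \gG$ and $q^* \in \gQ$ making both terms of Problem~\Cref{prob:population} equal to zero. Both integrands are pointwise nonnegative, so it suffices to drive them to zero $p(\vx,\vx')$-almost surely. The entropy piece vanishes iff $q^*(\cdot \mid \vx, \vx')$ is a point mass on some $\hat a^*(\vx,\vx')$; provided that same $\hat a^*$ also satisfies $\vg^*(\vx, \hat a^*) = \vx'$, the reconstruction piece vanishes as well.

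Constructing $\vg^*$ is easy. Set $\vg^*(\vx, \hat a) := \vg(\vx, \hat a)$ for $\hat a \in \{1, \ldots, k\}$ and extend to $\hat a \in \{k+1, \ldots, \hat k\}$ by any continuous-in-$\vx$ assignment (e.g., copy the $\hat a = 1$ branch). \Cref{ass:continuous_g} then guarantees $\vg^* \in \gG$.

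The delicate step is $q^*$. The naive candidate $q^*(\hat a \mid \vx, \vx') = \mathbf{1}(\hat a = \vf(\vx, \vx'))$, with $\vf$ from \Cref{rem:f}, is deterministic on the image of $G$ but fails the continuity requirement of $\gQ$ on all of $\gX \times \gX'$. To repair this, I would invoke \Cref{lem:q_construct}. Define $X_a := \{(\vx, \vg(\vx, a)) : \vx \in \gX\}$ for $a \in \{1, \ldots, k\}$. Each $X_a$ is the image of the continuous map $\vx \mapsto (\vx, \vg(\vx, a))$ on the compact cube $\gX = [0,1]^d$, hence compact and closed in the metric space $\gX \times \gX'$. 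Injectivity of $G$ (established just before \Cref{rem:f} using \Cref{ass:injective_g}) makes the $X_a$ pairwise disjoint. Applying \Cref{lem:q_construct} to $X_1, \ldots, X_k$ yields a function $q(\cdot \mid \vx, \vx')$ that is continuous in $(\vx, \vx')$, sums to one, and equals one on $X_a$ in its $a$-th coordinate; padding with zeros on the extra indices $\{k+1, \ldots, \hat k\}$ produces the desired $q^* \in \gQ$.

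Verification is then immediate: any $(\vx, \vx')$ drawn from $p(\vx, \vx')$ has the form $(\vx, \vg(\vx, a))$ for some $a \in \supp[p(a)]$, hence lies in $X_a$, giving $q^*(a \mid \vx, \vx') = 1$ and $q^*(\hat a \mid \vx, \vx') = 0$ for $\hat a \neq a$. This kills the entropy, and the only surviving term in the reconstruction expectation is $\|\vg(\vx, a) - \vg^*(\vx, a)\|^2 = 0$. The main obstacle is the apparent tension between determinism on $\supp[p(\vx, \vx')]$ and continuity of $\hat q$ on the entirety of $\gX \times \gX'$; \Cref{lem:q_construct} (which rests on the Vedenissoff theorem) is precisely what bridges that gap.
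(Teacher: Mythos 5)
Your proposal is correct and follows essentially the same route as the paper: the same choice of $\vg^*$ (up to the arbitrary extension on the extra indices), the same reduction to \Cref{lem:q_construct} applied to the disjoint closed graphs $G(\gX\times\{a\})$, and the same zero-padding and verification. The only cosmetic difference is that you justify closedness of each graph via compactness of $[0,1]^d$ rather than the closed graph theorem; both are valid here.
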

\begin{proof}
    Take $\vg^*(\vx, \hat a) := \mathbf{1}(\hat a \in \gA)\vg(\vx, \hat a)$. Essentially, $\vg^*$ imitates the ground-truth FDM $\vg$ when the action $\hat a \in \gA$, otherwise it simply outputs zero. Clearly, $\vg^* \in \gG$ since, by \cref{ass:continuous_g}, $\vg(\cdot, \hat a)$ is continuous for all $\hat a \in \gA$ (and the zero function is continuous).  

    We now construct a $q^* \in \gQ$ such that, for all $(\vx, \vx') \in G(\gX \times \gA)$ and all $\hat a \in \hat \gA$, we have $q^*(\hat a \mid \vx, \vx') = \delta(\vf(\vx, \vx') - \hat a)$, where $\vf(\vx, \vx')$ is defined in \cref{rem:f}. Later on, we show that the pair $(q^*, \vg^*)$ sets the objective to zero.

    Notice that $G(\gX \times \gA) = \bigcup_{a \in \gA} G(\gX\times \{a\})$ where $G(\gX\times \{a\}) = \{(\vx, \vg(\vx, a)) \mid \vx \in \gX \}$ is the graph of $\vg(\cdot, a)$. Since $\vg(\cdot, a)$ is continuous by \cref{ass:continuous_g}, the closed graph theorem implies that its graph, $G(\gX\times \{a\})$, is closed is $\gX \times \gX'$. Furthermore, we know that the sets $G(\gX\times \{a\})$ are mutually disjoint since otherwise there exists $(\vx, \vx') \in G(\gX\times \{a_1\}) \cap G(\gX\times \{a_2\})$ for distinct $a_1$, $a_2$ which implies $(\vx, \vg(\vx, a_1)) = (\vx, \vx') = (\vx, \vg(\vx, a_2))$, which violates \cref{ass:injective_g}. 
    
    To summarize, the last paragraph showed that $\{G(\gX \times \{a\})\}_{a \in \gA}$ is a partition of $G(\gX \times \gA)$ where each $G(\gX \times \{a\})$ is closed in $\gX \times \gX'$. By noticing that $\gX\times \gX'$ is a metric space, we can apply \cref{lem:q_construct} to show the existence of a function $q: \gA \times \gX \times \gX' \rightarrow [0,1]$ such that the following holds:
    \begin{itemize}
        \item for all $(\vx, \vx') \in \gX \times \gX'$, $\sum_{a \in \gA} q(a \mid \vx, \vx') = 1$,
        \item for all $a \in \gA$, $q(a \mid \vx, \vx')$ is continuous in $(\vx, \vx')$, and 
        \item for all $a \in \gA$ and all $(\vx, \vx') \in G( \gX \times \{a\})$, $ q(a \mid \vx, \vx') = 1$.
    \end{itemize}
    We choose, for all $(\hat a, \vx, \vx') \in \hat\gA \times \gX \times \gX'$, $q^*(\hat a \mid \vx, \vx') := \mathbf{1}(\hat a \in \gA) q(\hat a \mid \vx, \vx')$. In other words, $q^*$ imitates $q$ when $\hat a \in \gA$, and simply outputs zero when $\hat a \not\in \gA$. 
    
    We now check that $q^* \in \gQ$. Take $(\vx, \vx') \in \gX\times \gX'$. We have
    \begin{align}
        \sum_{\hat a \in \hat \gA} q^*(\hat a \mid \vx, \vx') = \sum_{\hat a \in \hat \gA} \mathbf{1}(\hat a \in \gA)q(\hat a \mid \vx, \vx') = \sum_{\hat a \in \gA} q(\hat a \mid \vx, \vx') = 1 \,.\,
    \end{align}
    where the second equality used the fact that $\gA \subseteq \hat\gA$ (since $\hat k \geq k$).
    
    Now take $\hat a \in \hat \gA$. If $\hat a \in \gA$, then $q^*(\hat a \mid \vx, \vx') = q(a \mid \vx, \vx')$ which is continuous in $(\vx, \vx')$. If $\hat a \not\in \gA$, then $q^*(\hat a \mid \vx, \vx') = 0$ which is also continuous. Thus $q^* \in \gQ$.

    Notice that, for all $(\vx, a) \in \gX\times\gA$, we have
    \begin{align}
        q^*(\hat a \mid \vx, \vg(\vx, a)) = \mathbf{1}(\hat a \in \gA)q(\hat a \mid \vx, \vg(\vx, a)) = \mathbf{1}(\hat a \in \gA)\mathbf{1}(\hat a = a) = \mathbf{1}(\hat a = a) \,,
    \end{align}
    where the third equality holds because $(\vx, \vg(\vx, a)) \in G(\gX \times \{a\})$, which implies that ${q(a \mid \vx, \vg(\vx, a)) = 1}$.

    Now, we must show that the pair $(\vg^*, q^*)$ sets the loss of Problem~\cref{prob:population} to zero. First note that 
    \begin{align}
        \sE_{p(\vx, \vx')} H(q^*(\cdot \mid \vx, \vx')) &= \sE_{p(\vx, a)} H(q^*(\cdot \mid \vx, \vg(\vx, a))) \\
        &= \sE_{p(\vx, a)} H(\mathbf{1}( \cdot = a)) \\
        &= \sE_{p(\vx, a)} 0 = 0 \,.
    \end{align}

    Also, 
    \begin{align}
        \sE_{p(\vx, \vx')} \sum_{\hat a \in \hat\gA} &q^*(\hat a \mid \vx, \vx') \normin{\vx' - \vg^*(\vx, \hat a)}^2_2 \\
        &= \sE_{p(\vx, \vx')} \sum_{\hat a \in \hat\gA} q^*(\hat a \mid \vx, \vx') \normin{\vx' - \mathbf{1}(\hat a \in \gA)\vg(\vx, \hat a)}^2_2 \\
        &= \sE_{p(\vx, a)} \sum_{\hat a \in \hat\gA} q^*(\hat a \mid \vx, \vg(\vx, a)) \normin{\vg(\vx, a) - \mathbf{1}(\hat a \in \gA)\vg(\vx, \hat a)}^2_2 \\
        &= \sE_{p(\vx, a)} \sum_{\hat a \in \hat\gA} \mathbf{1}(\hat a = a) \normin{\vg(\vx, a) - \mathbf{1}(\hat a \in \gA)\vg(\vx, \hat a)}^2_2 \\
        &= \sE_{p(\vx, a)} \normin{\vg(\vx, a) - \mathbf{1}(a \in \gA)\vg(\vx, a)}^2_2 \\ 
        &= \sE_{p(\vx, a)} \normin{\vg(\vx, a) - \vg(\vx, a)}^2_2 \\
        &= \sE_{p(\vx, a)} 0 = 0 \,,
    \end{align}    
    where the fourth equality used the fact that $\gA \subseteq \hat\gA$, which holds since $\hat k \geq k$. This concludes the proof.
\end{proof}

The following lemma simply states that if the integral of a non-negative continuous function $f$ w.r.t. to some measure $\mu$ is equal to zero, then the function must be zero on the support of $\mu$. 
\begin{lemma}\label{lem:ae2everywhere}
    Let $(X, \tau)$ be a topological space and let $\gF$ be the Borel sigma-algebra for $X$. Let $\mu: \gF \rightarrow [0, \infty)$ be a measure and let $f:X\rightarrow [0,\infty )$ be a non-negative continuous function. If $\int f d\mu = 0$, then $f(x) = 0$ for all $x \in \supp[\mu]$.
\end{lemma}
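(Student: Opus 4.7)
The plan is to prove the contrapositive: assuming there exists some $x_0 \in \supp[\mu]$ with $f(x_0) > 0$, I will show $\int f\, d\mu > 0$. The argument combines the topological continuity of $f$ with the topological definition of $\supp[\mu]$ (every open neighborhood of a support point has positive measure, analogously to how the paper defined $\supp[p(\vx\mid a)]$ in \Cref{sec:dgp}).

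First, I would use continuity of $f$ at $x_0$. Let $c := f(x_0)/2 > 0$. The set $U := f^{-1}((c, \infty))$ is open in $X$ by continuity of $f$ (preimage of an open set under a continuous function), and it is measurable since open sets belong to the Borel $\sigma$-algebra $\gF$. Moreover $x_0 \in U$, so $U$ is an open neighborhood of $x_0$.

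Second, I would invoke the definition of $\supp[\mu]$: since $x_0 \in \supp[\mu]$ and $U$ is an open neighborhood of $x_0$, we must have $\mu(U) > 0$.

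Third, I would combine these facts to get a lower bound on the integral. Since $f \geq 0$ everywhere and $f(x) > c$ for all $x \in U$,
\begin{align*}
    \int_X f\, d\mu \;\geq\; \int_U f\, d\mu \;\geq\; c\,\mu(U) \;>\; 0\,,
\end{align*}
which contradicts the hypothesis that $\int f\, d\mu = 0$. Therefore no such $x_0$ can exist, and $f$ vanishes on $\supp[\mu]$.

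There is no real obstacle here: the proof is a standard measure-theoretic argument, and the only subtlety is being careful that the neighborhood extracted from continuity is actually open (hence Borel measurable) so that the support definition applies cleanly. I do not need any additional hypothesis on $X$ such as local finiteness of $\mu$ or separability, since the argument only uses the definitional fact that open neighborhoods of support points have positive measure.
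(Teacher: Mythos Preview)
Your proof is correct and follows essentially the same contrapositive argument as the paper: use continuity of $f$ to produce an open neighborhood of $x_0$ on which $f$ is positive, then invoke the support definition to get positive measure and hence a positive integral. The only cosmetic difference is that the paper uses $f^{-1}((0,\infty))$ and appeals to a standard exercise for the final inequality, whereas you pick the smaller set $f^{-1}((c,\infty))$ with $c=f(x_0)/2$ to make the lower bound $c\,\mu(U)>0$ explicit.
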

\begin{proof}
    We show the contrapositive statement. Suppose there exists $x_0 \in \supp[\mu]$ such that $f(x_0) > 0$. Since $f$ is continuous, we have that $f^{-1}((0, \infty))$ is an open neighborhood of $x_0$. Since $x_0 \in \supp[\mu]$, we have that $\mu(f^{-1}((0, \infty))) >0$. But this means $\int f d\mu > 0$ \citep[Section 3A, Exercise 3]{axler2019measure}.
\end{proof}

We are finally ready to prove \cref{thm:ident}.

\mainThm*

\begin{proof}
    If $(\hat \vg, \hat q)$ solves Problem~\cref{prob:population}, we must have $\hat \vg \in \gG$ and $\hat q \in \gQ$. Moreover, since \cref{ass:continuous_g,ass:injective_g} hold and $\hat k \geq k$, we can apply \cref{prop:min_existence} to conclude that there exists a pair $(\vg^*, q^*) \in \gG \times \gQ$ that reaches zero loss. From this, we conclude that $(\hat \vg, \hat q)$ must also reach zero loss, otherwise it is not optimal.

    Thus we have 
    \begin{align}
        \sE_{p(\vx, \vx')} \left[\sum_{\hat a \in \hat \gA} \hat q(\hat a \mid \vx, \vx') \normin{\vx' - \hat\vg(\vx, \hat a)}^2_2 + H(\hat q(\cdot \mid \vx, \vx')) \right] = 0 \,.
    \end{align}
    Since both terms are lower bounded by 0, both terms must equal zero. We start by using the fact that the entropy term is equal to zero:
    \begin{align}
        \sE_{p(\vx, \vx')} H(\hat q(\cdot \mid \vx, \vx')) &= 0 \\
        \sE_{p(\vx, a)} H(\hat q(\cdot \mid \vx, \vg(\vx, a))) &= 0 \\
        %&\sE_{p(\vx, a)} -\sum_{\hat a \in \hat \gA}\hat q(\hat a \mid \vx, \vg(\vx, a)) \log \hat q(\hat a \mid \vx, \vg(\vx, a)))  = 0 \\
        \sE_{p(a)} \sE_{p(\vx \mid a)} H(\hat q(\cdot \mid \vx, \vg(\vx, a)))  &= 0 \\
        \sum_{a \in \supp[p(a)]} p(a)\sE_{p(\vx \mid a)} H(\hat q(\cdot \mid \vx, \vg(\vx, a)))  &= 0 \,,
    \end{align}
    where $p(\vx, a) := p(\vx)\pi(a \mid \vx)$, $p(a) := \int p(\vx, a) d\vx$ and $p(\vx \mid a) := p(\vx, a) / p(a)$. The l.h.s. is a sum of positive terms. We thus have, for each $a \in \supp[p(a)]$,
    \begin{align}
    \sE_{p(\vx \mid a)} H(\hat q(\cdot \mid \vx, \vg(\vx, a))) &= 0\,.
    \end{align}
    Since $H(\hat q(\cdot \mid \vx, \vg(\vx, a)))$ is greater or equal to zero and is a continuous function of $\vx$ (it follows from the continuity of $\vg(\vx, a)$, $\hat q(\hat a \mid \vx, \vx')$ and $ y \mapsto y \log y$\footnote{In the definition of entropy, $0 \log 0$ is defined to be equal to zero, which makes $y \mapsto y\log y$ a continuous function on $[0, \infty)$ since $\lim_{y \to 0^+} y \log y = 0$.}), \cref{lem:ae2everywhere} implies that $H(\hat q(\cdot \mid \vx, \vg(\vx, a))) = 0$, for all $\vx \in \supp[p(\vx \mid a)]$.
    %\begin{align}
    %    H(\hat q(\cdot \mid \vx, \vg(\vx, a))) &= 0 \\
    %    -\sum_{\hat a \in \hat \gA}\hat q(\hat a \mid \vx, \vg(\vx, a))) \log \hat q(\hat a \mid \vx, \vg(\vx, a)))  &= 0 \,. \label{eq:kdn339d}
    %\end{align}
    %Since the map $\vg(\vx, a)$ is continuous in $\vx$, since $ y \mapsto y \log y$ is continuous\footnote{In the definition of entropy, $0 \log 0$ is defined to be equal to zero, which makes $y \mapsto y\log y$ a continuous function on $[0, \infty)$ since $\lim_{y \to 0^+} y \log y = 0$.} on $[0,\infty)$ and since $\hat q(\hat a \mid \vx, \vx')$ is continuous in $(\vx, \vx')$ for all $\hat a \in \hat\gA$, we have that the l.h.s. of \cref{eq:kdn339d} is a continuous function of $\vx$. We can thus apply \cref{lem:ae2everywhere} to conclude that the equality holds \textit{for all} $\vx \in \supp[p(\vx \mid a)]$, instead of just ``almost all''.
    
    To summarize, we showed that, for all $a \in \supp[p(a)]$ and all $\vx \in \supp[p(\vx \mid a)]$, we have that ${H(\hat q(\cdot \mid \vx, \vg(\vx, a))) = 0}$. Since $$\bigcup_{a \in \supp[p(a)]} \supp[p(\vx \mid a)] \times \{a\} = \supp[p(\vx, a)]\,,$$
    it is equivalent to saying that, for all $(\vx, a) \in \supp[p(\vx, a)]$, we have ${H(\hat q(\cdot \mid \vx, \vg(\vx, a))) = 0}$. This means there exists a function $\vv: \supp[p(\vx, a)] \rightarrow \hat \gA$ such that, for all $(\vx, a) \in \supp[p(\vx, a)]$ and all $\hat a \in \hat \gA$, 
    \begin{align}
    \hat q(\hat a \mid \vx, \vg(\vx, a)) = \mathbf{1}(\hat a = \vv(\vx, a))\, , \label{eq:dirac_q} 
    \end{align}
    which proves the first statement.
    
    To prove the second statement, we rewrite the above equation as
    \begin{align}
    \hat q(\hat a \mid G(\vx, a)) = \mathbf{1}(\hat a = \vv(\vx, a))\, , \label{eq:dirac_q2} 
    \end{align}
    where $G:\gX\times\gA \rightarrow \gX\times\gX'$ was previously defined as $G(\vx, a):=(\vx, \vg(\vx, a))$. For each pair $(a, \hat a) \in \gA \times \hat\gA$, define the function $\hat q_{a, \hat a}: \gX \rightarrow [0,1]$ as $\hat q_{a, \hat a}(\vx) := \hat q(\hat a \mid G(\vx, a))$. Since $\hat q_{a, \hat a}$ is the composition of two continuous functions, namely $G(\cdot, a)$ and $\hat q(\hat a \mid \cdot, \cdot)$, it must also be continuous. We rewrite \cref{eq:dirac_q2} as follows: for all $\hat a \in \hat\gA$, all $a \in \supp[p(a)]$ and all $\vx \in \supp[p(\vx \mid a)]$, we have
    \begin{align}
    \hat q_{a, \hat a}(\vx) = \mathbf{1}(\hat a = \vv(\vx, a))\, , \label{eq:dirac_q3} 
    \end{align}
    Now, fix $\hat a\in \hat\gA$ and $a \in \supp[p(a)]$. It is clear from the above equation that $\hat q_{a, \hat a}(\supp[p(\vx \mid a)]) \subseteq \{0,1\}$. But since $\supp[p(\vx \mid a)]$ is connected (\Cref{ass:connected_p(x|a)}) and $\hat q_{a, \hat a}$ is continuous, we know that the image $\hat q_{a, \hat a}(\supp[p(\vx \mid a)])$ must also be connected. This implies that $\hat q_{a, \hat a}(\supp[p(\vx \mid a)])$ is either $\{0\}$ or $\{1\}$. %This means that, on $\supp[p(\vx \mid a)]$, the function $\hat q_{a, \hat a}$ is constant. 
    Thus, there is a function $\phi: \supp[p(a)] \times \hat\gA \rightarrow \{0,1\}$ that outputs the value that $\hat q_{a, \hat a}$ uniformly takes on $\supp[p(\vx \mid a)]$. In other words, for all $\hat a \in \hat\gA$, all $a \in \supp[p(a)]$ and all $\vx \in \supp[p(\vx \mid a)]$, we have
    \begin{align}
        \hat q_{a, \hat a}(\vx) = \phi(a, \hat a) \\
        \mathbf{1}(\hat a = \vv(\vx, a)) = \phi(a, \hat a) \,.
    \end{align}
    The last equation above implies that $\vv(\vx, a)$ is constant in $\vx$, for all values of $a \in \supp[p(a)]$. This means there is a function $\vv: \supp[p(a)] \rightarrow \hat\gA$ such that, for all $a \in \supp[p(a)]$ and all $\vx \in \supp[p(\vx \mid a)]$,
    \begin{align}
        \vv(\vx, a) = \vv(a) \,, \label{eq:sigma}
    \end{align}
    which shows the second statement.
    
    To prove the third statement ($\vv(a)$ injective), we leverage the fact that the reconstruction term is equal to zero:
    \begin{align}
        &\sE_{p(\vx, \vx')} \sum_{\hat a \in \hat \gA} \hat q(\hat a \mid \vx, \vx') \normin{\vx' - \hat\vg(\vx, \hat a)}^2_2 = 0 \\
        &\sE_{p(\vx, a)} \sum_{\hat a \in \hat \gA} \hat q(\hat a \mid \vx, \vg(\vx, a)) \normin{\vg(\vx, a) - \hat\vg(\vx, \hat a)}^2_2 = 0 \\
        &\sE_{p(a)} \sE_{p(\vx \mid a)} \sum_{\hat a \in \hat \gA} \hat q(\hat a \mid \vx, \vg(\vx, a)) \normin{\vg(\vx, a) - \hat\vg(\vx, \hat a)}^2_2 = 0 \\
    \end{align}
    The term inside the expectation $\sE_{p(a)}$ are greater or equal to zero, thus each of them must be equal to zero, i.e. for all $a \in \supp[p(a)]$, we have
    \begin{align}
        \sE_{p(\vx \mid a)} \sum_{\hat a \in \hat \gA} \hat q(\hat a \mid \vx, \vg(\vx, a)) \normin{\vg(\vx, a) - \hat\vg(\vx, \hat a)}^2_2 = 0 \,.
    \end{align}
    Since the inside of the expectation is always greater or equal to zero and is a continuous function of $\vx$, \cref{lem:ae2everywhere} implies that, for all $\vx \in \supp[p(\vx \mid a)]$. % we must have that, for \textit{almost all} $\vx \in \supp[p(\vx \mid a)]$,
    \begin{align}
        \sum_{\hat a \in \hat \gA} \hat q(\hat a \mid \vx, \vg(\vx, a)) \normin{\vg(\vx, a) - \hat\vg(\vx, \hat a)}^2_2 = 0 \,. \label{eq:pdmss093}
    \end{align}
    %Since all functions appearing above are continuous in $\vx$, we have that the l.h.s. is continuous in $\vx$ and thus, by \cref{lem:ae2everywhere}, we have equality \textit{for all} $\vx \in \supp[p(\vx \mid a)]$, instead of ``for almost all''. 
    To summarize, we showed that, for all $a \in \supp[p(a)]$ and all $\vx \in \supp[p(\vx \mid a)]$, \cref{eq:pdmss093} holds. We can thus derive that 
\begin{align}
        0 = \sum_{\hat a \in \hat \gA} \hat q(\hat a \mid \vx, \vg(\vx, a)) \normin{\vg(\vx, a) - \hat\vg(\vx, \hat a)}^2_2 &= \sum_{\hat a \in \hat \gA} \mathbf{1}(\hat a = \vv(\vx,a)) \normin{\vg(\vx, a) - \hat\vg(\vx, \hat a)}^2_2  \\
        &= \sum_{\hat a \in \hat \gA} \mathbf{1}(\hat a = \vv(a)) \normin{\vg(\vx, a) - \hat\vg(\vx, \hat a)}^2_2 \\
        &= \normin{\vg(\vx, a) - \hat\vg(\vx, \vv(a))}^2_2\,
\end{align}
where the first line leverages \cref{eq:dirac_q} and the second line uses \cref{eq:sigma}.
This means that, for all $a \in \supp[p(a)]$ and all $\vx \in \supp[p(\vx \mid a)]$,
\begin{align}
    \vg(\vx, a) = \hat \vg(\vx, \vv(a)) \,.
\end{align}
We now show that $\vv: \supp[p(a)] \rightarrow \hat\gA$ is injective. We proceed by contradiction. Suppose it is not injective. This means there exist two distinct $a_1, a_2 \in \supp[p(a)]$ such that $\vv(a_1) = \vv(a_2)$. By \cref{ass:intersect_p(x|a)}, we know there exists an $\vx_0$ that is in both $\supp[p(\vx \mid a_1)]$ and $\supp[p(\vx \mid a_2)]$. We note that
\begin{align}
    \vg(\vx_0, a_1) = \hat\vg(\vx_0, \vv(a_1)) = \hat\vg(\vx_0, \vv(a_2)) =  \vg(\vx_0, a_2) \,,
\end{align}
where the first equality holds because $\vx_0 \in \supp[p(\vx \mid a_1)]$ and the last equality holds because $\vx_0 \in \supp[p(\vx \mid a_2)]$. But this is contradicting \cref{ass:injective_g}. Thus, $\vv(a)$ is injective. 
\end{proof}

\end{document}